\newtheorem*{theorem*}{Theorem}
\newtheorem*{proposition*}{Proposition}
\newtheorem*{lemma*}{Lemma}
\newtheorem{theorem}{Theorem}[section]
\newtheorem{definition}[theorem]{Definition}
\newtheorem{proposition}[theorem]{Proposition}
\newtheorem{corollary}[theorem]{Corollary}
\newtheorem{example}[theorem]{Example}
\newcommand{\vset}{\vec X}
\newcommand{\structure}{{\cal S}}
\newcommand{\exo}{{\cal U}}
\newcommand{\ens}{{\cal V}}
\newcommand{\range}{{\cal R}}
\newcommand{\graph}{{\sf G}}
\newcommand{\effect}{{\cal F}}
\newcommand{\model}{M}
\newcommand{\mods}{{\cal M}}
\newcommand{\assign}[2]{{#1}\leftarrow{#2}}
\newcommand{\surge}[2]{[{#1}\leftarrow{#2}]}
\newcommand{\cause}{{\sf Cause}}
\newcommand{\butfor}{{\sf ButFor}}
\newcommand{\crel}[2]{\succeq_{#1,#2}}
\newcommand{\halpern}{{\sf HP}}
\newcommand{\noo}[1]{}
\title{On Preemption and Overdetermination in Formal Theories of Causality}
\author{Sjur K Dyrkolbotn
\institute{Western Norway University of Applied Sciences\\ Bergen, Norway}
\email{sdy@hvl.no}
}
\begin{document}
\maketitle

\begin{abstract}
One of the key challenges when looking for the causes of a complex event is to determine the causal status of factors that are neither individually necessary nor individually sufficient to produce that event. In order to reason about how such factors should be taken into account, we need a vocabulary to distinguish different cases. In philosophy, the concept of overdetermination and the concept of preemption serve an important purpose in this regard, although their exact meaning tends to remain elusive. In this paper, I provide theory-neutral definitions of these concepts using structural equations in the Halpern-Pearl tradition. While my definitions do not presuppose any particular causal theory, they take such a theory as a variable parameter. This enables us to specify formal constraints on theories of causality, in terms of a pre-theoretic understanding of what preemption and overdetermination actually mean. I demonstrate the usefulness of this by presenting and arguing for what I call the principle of presumption. Roughly speaking, this principle states that a possible cause can only be regarded as having been preempted if there is independent evidence to support such an inference. I conclude by showing that the principle of presumption is violated by the two main theories of causality formulated in the Halpern-Pearl tradition. The paper concludes by defining the class of empirical causal theories, characterised in terms of a fixed-point of counterfactual reasoning about difference-making. It is argued that theories of actual causality ought to be empirical.
\end{abstract}

\section{Introduction}

When reasoning about actual causality, it is easy to fall in the logic trap, by thinking that causes behave like necessary and sufficient conditions for outcomes.\footnote{Actual causality is concerned with causal attributions for concrete events, not general dependencies or regularities. To illustrate with an example from Halpern \cite{halpern15}, 'smoking caused Bill's cancer' is a claim about actual causality, while 'smoking causes cancer', a statement of general causality, is not. See, generally, \cite{lewis73,halpern05}.} Of course, we know this way of thinking is misguided. If Jane and Julie both give a lethal dose of poison to Bob, we realise that they are both causes of his death, even though neither of them are necessary. If they give half a dose each, the same conclusion is drawn, even if neither is sufficient.

This is widely accepted and hardly any theory of causality proposes otherwise. But the trappings of classical logic are not so easily avoided. In legal theory, for instance, the so-called NESS test holds that $A$ is cause of $B$ if $A$ is a {\it necessary element of a sufficient set of conditions} for $B$ \cite{wright88}. This gloss on the logic-based perspective suffices to deal with simple examples, but problems still arise.\footnote{For a formalisation of the NESS test that suffers from many of the same problems as those identified with respect to the Halpern-Pearl definitions considered in the present paper, see \cite{braham12}.}

For an illustration of this, consider a scenario where Jane gives Bob half a dose of poison while Julie administers a whole dose. We know from experiment (with other Bobs, no doubt) that one dose or more is deadly. Furthermore, we assume that any integer percentage of a whole dose of poison could have been given by both Jane and Julie; none of them were compelled to give a certain dose of poison to Bob. When Bob dies, we would like to know who caused his death. In view of logic-based intuitions, some might be tempted to claim that only Julie counts as a proper cause, since her dose of poison was sufficient all by itself. Indeed, this is the judgement made by the NESS test: Jane is not necessary for any sufficient set of conditions, since such a set also has to include Julie.

Others will disagree with this judgment, maintaining that we have to count Jane as a proper cause, since she clearly contributed to the outcome. This is the position taken by the first two definitions of causality in the HP tradition, due to Halpern and Pearl \cite{halpern05}. However, the most recent definition in this tradition, due to Halpern, sides with the NESS test; Jane will not be considered a cause, or even part of cause, when Bob dies \cite{halpern15,halpern16a}.

This underscores a lack of theoretical agreement about very simple cases, suggesting the need for work that can help us reason {\it about} causal theories, to help us move beyond vague intuitions when trying to evaluate them.\footnote{In \cite{glymour10}, formal theories of actual causation are criticised for being based on ``induction from intuitions about an infinitesimal fraction of the possible examples and counterexamples''. To improve the methodology, the authors propose an approach where the aim is to find ``reliable indicators'' for actual causation, rather than sweeping formal definitions. While I agree with the gist of the criticism, the present paper does not pursue a pragmatic response, but aims instead to contribute to a meta-theory that can be used to justify formal definitions of causality in a more systematic way than by testing them on a handful of examples. I believe we should aim to develop such a theory to account for some agreed-upon or defeasibly stipulated principles that express what we expect of a formal theory of causation in a given context.}

To highlight what is at stake here, consider changing the scenario above so that Jane only gives Bob 1\% of a full dose of poison. Should she still count as a cause of his death? Then contrast this with the situation that arises when Jane gives Bob 99\% of a full dose of poison - enough to kill him even if Julie had given only 1\%. Is it still defensible to deny that Jane was a cause of Bob's death? Intuitively, most of us would answer no to both questions, but this forces us to acknowledge that our logic-based intuitions are at best context-dependent: if Jane gives Bob a 99\% dose of poison, we are not willing to deny that Jane counts as a cause of Bob's death, even though we did want to do this when her contribution was 1\% of a full deadly dose. From this arises the problematic question of where exactly to draw the line: how much poison does it take to count as a cause of Bob's death by poison? If we go down that route, our theory of causality will effectively stumble over a {\it sorites} paradox, which is not likely to be solved any time soon. It might be better, therefore, to understand actual causes as factors that make a {\it causal contribution} to a given outcome.

In the following paper, this is the concept of ``cause'' I will elaborate on, building on how that notion is traditionally used in the HP framework.\footnote{In recent work, Halpern has used the terminology ``part of a cause'' and ``complex cause'' to highlight a distinction that arises quite naturally from his most recent definition of causality \cite{halpern15}. This does not address the conceptual problem of how to draw a distinction between ``proper'' causes and causal contributions, if such a distinction is thought to exist. In the Jane-Julie example, Jane is neither a complex cause nor a part of a cause, meaning that the intuition that she contributes to Bob's death by poisoning him is not captured at all. Moreover, if Jane gives Bob 1\% of a full dose of poison while Julie gives 99\% of a full dose, Jane alone will suddenly count as a complex cause of Bob's death according to Halpern's definition. Hence, if we want to resist attributing causal significance to Jane's action in such cases, we cannot simply equate Halpern's notion of a complex cause with that of a cause {\it simpliciter}.} With the understanding that ``cause'' means ``causal contribution'', it becomes more plausible to regard Jane as a cause of Bob's death in the example above. In principle, this intuition about causal contribution also remains robust regardless of whether she administers 1\% or 99\% of a full dose of poison. Either way, Jane contributes to Bob's death. It might still be possible to dispute this, but the more obvious objections have lost their appeal.

Moreover, when we consider the NESS test or the most recent definition of HP causality more closely, we notice something strange: in the situation described above, these definitions fail to recognise the contribution made by Jane, but in a model where Julie is assumed to have exactly one alternative action, namely to give half a dose of poison to Bob, Jane will come out as a cause of his death after all. Why should the causal status of Jane's actual action depend on the contrary-to-fact options available to Julie? The natural starting-point is to assume that there is no such dependence, unless we have a specific reason to believe otherwise.\footnote{This could be, for instance, if the outcome under consideration is {\it inevitable} due to Julie's limited options, in which case neither June nor Julie will be regarded as having caused Bob's death.} In the June and Julie example, I fail to see any such reason. To my knowledge, none has been produced in the literature either, where these kinds of cases appear to have been overlooked. 

To structure our reasoning about the problem that arises here, and other problems like it, it is helpful to use the concepts of {\it overdetermination} and {\it preemption}, broadly construed. The former notion is typically used to pinpoint causes that are not necessary for their effects, while the second notion is generally used to pinpoint putative causes that are blocked from making a contribution to their apparent effects.\footnote{See, e.g.,  \cite{lewis73,hitchcock11,paul13,bernstein14}.} Understood in this way, when Jane counts as a cause of Bob's death, she is an overdetermining cause, adding to the contribution made by Julie. By contrast, if Jane is not regarded as having made a causal contribution, she must have been preempted from doing, presumably by Julie, whose dose was sufficient all by itself. This usage is consistent with how I will define these terms in the present paper, as theory-neutral concepts that can be used to talk about causal theories. The main contribution of my work is that I show how this can be done formally in the HP framework, leading to a template for defining a new class of causal theories within that framework, characterised by a fixed-point property.


\section{Causal Models and Theories of Causation}\label{sec:2}

I begin by presenting the basic constructs of the HP framework for causal modelling \cite{halpern01,halpern05,halpern15}. This framework is based on formal specifications that generate a set of structural equations that represent causal dependencies. To specify the variables and their ranges we use a {\it causal structure}, a tuple $\structure = (\exo,\ens,\range)$ where $\exo$ is a finite set of exogenous variables, $\ens$ is a finite set of endogenous variables and $\range: \exo \cup \ens \to 2^{D}_{\sf {fin}}$ is a map from variables to a finite collection of values from some domain $D$.

Given a causal structure, a causal model over $\structure$ is a tuple $\model = (\graph,\effect, \vec u)$ where $\graph = (N,\ens)$ is a directed acyclic graph (DAG) over $\ens$, with directed edges $N \subseteq \ens \times \ens$. For all $V \in \ens$, $p(V) = \{U \mid (U,V) \in N\}$ collects the set of parents of $V$ in $\graph$, i.e., the set of variables pointing to $V$. $\effect = (F_V)_{V \in \ens}$ is a collection of functions, one for each endogenous variable. For all $V \in \ens$, $F_V: \prod\limits_{U \in \exo \cup p(V)} \range(U) \to \range(V)$ depends only on $\exo$ and the values of all parents of $V$. The final parameter of a model, $\vec u = (u_1,\ldots,u_n)$, is a setting of values for $\exo = \{U_1,\ldots,U_n\}$, ordered arbitrarily such that $u_i \in \range(U_i)$ for all $1 \leq i \leq n$.

\begin{example}\label{ex:1}
	To represent two of the Jane-Julie scenarios considered in the introduction, we can use $\structure_1 = (\exo_1,\ens_1,\range_1)$ and $\structure_2 = (\exo_2,\ens_2,\range_2)$ where $\exo_1 = \exo_2 = \{U_1,U_2\}$ and $\ens_1 = \ens_2 = \{J_1,J_2,B\}$. The ranges are defined such that $\range_1(U_1) = \range_2(U_1) = \range_1(J_1) = \range_2(J_1) = \{0,0.5\}, \range_1(U_2) = \range_1(J_2) = \{0.5,1\}, \range_2(U_2) = \range_2(J_2) = \{0,0.5,1\}$ and $\range_1(B) = \range_2(B) = \{0,1\}$. So the only difference between $\structure_1$ and $\structure_2$ concerns the range of $U_2$ and $J_2$. We can then model the two problem cases by two models, $M_1$ and $M_2$, which are identical except for the range of $U_2, J_2$. Specifically, both models are made up of the context $\vec u = (0.5,1)$ and the graph $\graph$ depicted on the left below, with associated functions shown on the right (domains and function names are left implicit).
	$$
	\begin{array}{cc}
\xymatrix{ J_1 \ar[dr] && J_2 \ar[dl] \\ 
									&	B	 } & \begin{array}{l} 
									J_1 = U_1 \\
									J_2 = U_2 \\
									B = \begin{cases} 0 \text{ if } J_1 + J_2 \geq 1 \\
									1 \text{ otherwise }
									\end{cases}
									\end{array}
\end{array}
	$$
Intuitively, $J_1$ records the amount of poison given by Jane, $J_2$ is the amount given by Julie, and $B$ is a variable encoding whether Bob lives or dies, with $0$ representing that he dies.
\end{example}

If $X \subseteq \exo \cup \ens$ is a set of variables, we use $[\assign{\vec X}{\vec x}]$ to denote an assignment of values to $X$. Here $\vec X = \{X_1,\ldots,X_n\}$ is an arbitrary ordering of $X$ and $\vec x = (x_1,\ldots,x_n)$ with $x_i \in \range(X_i)$ for all $1 \leq i \leq n$. We write $(\vec X = \vec x)$ to denote the conjunction $\bigwedge\limits_{1 \leq i \leq n}(X_i = x_i)$. Moreover, we may overload this notation by writing $[\assign{\vec Y}{\vec x}]$ and $(\vec Y = \vec x)$ when $\vec Y \subseteq \vec X$. In these cases, we read $\vec x$ as $(x_i)_{\{1 \leq i \leq n \mid X_i \in \vec Y\}}$. Similarly, we may write $[\assign{\vec X}{\vec x}]$ also when $\vec x$ is a partial assignment on $\vec X$, i.e., such that there is $X_i \in \vec X$ with $x_i \not \in \vec x$. In this case, $X_i$ is not affected by the assignment $\vec x$.

Given a model $\model$, the associated system of equations is $(V = F_V(\vec X_V))_{V \in \ens}$ where every $\vec X_V = (U_1,\ldots,U_n)$ is an arbitrary enumeration of the elements in $\exo \cup p(V)$. Since $\graph$ is acyclic, this system of equations has a unique solution, namely $S_\model: \ens \to D$ (where, obviously, $S_\model(V) \in \range(V)$ for all $V \in \ens$).

An atomic causal expression has the form $(V = v)$ where $V \in \ens$, $x \in \range(V)$. It is to be understood as the claim that the variable $V$ has the value $v$. Given a model $\model$ and an atomic expression $(V = v)$, truth on a model is defined by the clause $\model \models (V = v) \Leftrightarrow S_M(V) = v$.

A boolean combination of atomic expressions is called a {\it basic formula}. Truth on a model is inductively defined for boolean combinations in the standard way.


A {\it formula} is either a basic formula or an update-formula of the form $\surge {\vec X} {\vec x} \phi$ where $\phi$ is a basic formula and $\surge {\vec X}{\vec x}$ is an assignment of values to some $\vec X \subseteq \ens$. An update-formula is to be understood as saying that $\phi$ is guaranteed to be true if we intervene to ensure that $x_i$ is assigned to $X_i$ for all $x_i \in \vec x$.

Truth is defined for update-formulas by the recursive clause 
$$\model \models \surge {\vec Y}{\vec y}\phi \Leftrightarrow M_{\surge{\vec Y}{\vec y}} \models \phi$$  where $M_{\surge{\vec X}{\vec x}}$ is the model obtained from $M$ by replacing $F_{X_i}$ with the constant function $x_i$ for all $x_i \in \vec x$. We will not require nested update-formulas, but we will work with sequences of model updates. Specifically, if  $M' = M_{[\assign{\vec X}{\vec x}]}$, we might ask about the truth of formulas like $\psi = [\assign{\vec Y}{\vec y}]\phi$ on $M'$. This is well-defined also when $Y \cap X \not = \emptyset$; to evaluate the truth of $\psi$ on $M'$ one simply replaces $x_i$ by $y_j$ for all $X_i = Y_j \in X \cap Y$.

\begin{example}
	[Example \ref{ex:1} continued]
	Recall the models $M_1$ and $M_2$ from Example \ref{ex:1}. Clearly, the solutions of the associated systems of equations are the	same, $S_{M_1} = S_{M_2} = \{(J_1,0.5),(J_2,1),(B,0)\}$. Hence, we have $M_1 \models (B = 0)$ and $M_2 \models (B = 0)$. Moreover, we have $M_1 \models [\assign {J_1}{0}](B=0)$ and $M_2 \models [\assign{J_1}{0}](B=0)$. This shows that $J_1$ taking the value $0.5$ is not a necessary condition for $(B = 0)$ in either $M_1$ or $M_2$. Also observe that while $M_2 \models [\assign{J_2}{0}]\neg (B=0)$, the update formula $[\assign{J_2}{0}]\neg (B=0)$ is not defined with respect to $M_1$, since $0 \not \in \range_1(J_2)$.
\end{example}

Given a model $\model$, we define a set of associated models as follows:
$$
\mods = \{M' \mid \exists \vec X \subseteq \ens: \exists \vec x \in \range(\vec X): M_{[\assign{\vec X}{\vec x}]} = M'\}
$$
That is, $\mods$ collects all causal models that can be obtained from $\model$ by intervening on some of its endogenous variables. For instance, the models $M_1$ and $M_2$ from Example \ref{ex:1} both admit a variant of the model depicted below as one of its associated models.

$$
\begin{array}{cc}
\xymatrix{ J_1 & J_2 & B } &  \begin{array}{l} J_1 = U_1 \\
											J_2 = U_2 \\
											B = 1
							\end{array}
\end{array}
$$

If the context is $\vec u = (0.5,1)$, this is the counterfactual scenario where Bob lives even though he receives one and a half doses of poison. Hence, the graph has no edges at all. If the range of the variables is given by $\structure_1$, the model above is associated with $M_1$ and when the range is determined by looking at $\structure_2$, the model is associated with $M_2$. Either way, the relevant update that generates the no-edges model is $[\assign{B}{1}]$.

The point of causal models is to facilitate the definition of causal theories. In the HP tradition, there have been many proposals  for such theories, suggesting an indirect analysis of their properties. For this purpose, we first define the basic signature of a causal theory in this setting.

\begin{definition}[Causal theories]\label{def:causal}
Given a signature $\structure$, a causal theory for $\structure$ is a map $\cause$, such that $\cause(M,\phi) \subseteq \ens$ for all models $M$ and all basic formulas $\phi$ built over $\structure$.
\end{definition}

As a first example of a causal theory, we now define the bedrock of all other theories of actual causality in {\halpern} tradition: the {\it but-for} test.

\begin{definition}[But-for causality]\label{def:butforcause}
The but-for theory, $\butfor$, is defined for all models $M$ and all basic formulas $\phi$ as follows:
	$$
	X \in \butfor(M,\phi) \Leftrightarrow M \models \phi \text{ and } \exists x \in \range(X): M \models\surge{X}{x} \neg \phi
	$$
\end{definition}

It is hard to see how the but-for test could be wrong when it designates something as a causal contributor. The definition can be given a logical reading, but it effectively asks for empirical evidence of $V$'s causal power: there is a way to change $V$ so that $\phi$ no longer obtains. The problem with the but-for theory is that it does not recognise enough causes. 

To illustrate, let $M_1$ and $M_2$ be as in Example \ref{ex:1}. Then it is easy to see that $\butfor(M_2,(B=0)) = \{J_2, B\}$. We have $M_1 \models [\assign{J_2}{0}]\neg (B = 0)$, so $J_2$ is a cause, while we also have $M_2 \models [\assign{J_1}{0}](B = 0)$, so $J_1$ is not a cause. However, $\butfor(M_1,(B = 0))$ only contains the trivial cause $B$, since $M_1 \models [\assign{J_2}{0.5}](B = 0)$. This is unreasonable; clearly, poison is causing Bob's death in both these models. Moreover, his death is not inevitable, since $M_1 \models [\assign{J_1}{0},\assign{J_2}{0.5}]\neg B$. Hence, it seems we need to be more permissive about what we regard as causes. Many  definitions have been formulated to achieve this in the HP framework, with Halpern's most recent proposal given below.

\begin{definition}[HP causality \cite{halpern15}]\label{def:hpcause}
	Given a model $M$ and basic formula $\phi$, $\vec X$ is a complex cause\footnote{Strictly speaking, the cause emanates from the value of $\vec X$ in $M$, but for the definition in \cite{halpern15} we might as well abstract directly to the level that holds more interest.} for $\phi$ in $M$ if 
	\begin{description}
		\item[AC1.] $M \models \phi$.
		\item[AC2.] There is a set $\vec W \subseteq \ens$ and a setting $\vec x$ for $\vec X$ such that
		$$
		M \models [\assign{\vec W}{\vec w}, \assign{\vec X}{\vec x}] \neg \phi	
		$$
		where $M \models (\vec W = \vec w)$.
		\item[AC3.] $\vec X$ is minimal with respect to set-inclusion; there is no subset of $\vec X$ for which AC1 and AC2 holds.
	\end{description}
	The notion of a complex cause gives rise to the following causal theory $\halpern$, where we maintain the cause-as-contribution perspective, defined for all $M, \phi$ as follows:
	$$
	\begin{array}{r}
	\halpern(M,\phi) = \{V \in \ens \mid \exists \vec X: V \in \vec X \text{ and } \\ \vec X \text{ is a complex cause of } \phi \text{ in } M\}
	\end{array}
	$$
\end{definition}

If $\vec x, \vec W$ and $\vec w$ can be used to show that $\vec X$ satisfies AC1-AC3 for some $\phi$, we say that $(\vec x, \vec W, \vec w)$ is a {\it witness} of $\vec X$ being a complex cause of $\phi$. It is then also a witness of $V \in \halpern(M,\phi)$ for all $V \in \vec X$.

For $M_1$, the $\halpern$ theory does better than the but-for test. Specifically, since $M_1 \models [\assign{J_1}{0},\assign{J_2}{0.5}]\neg (B = 0)$, it is easy to verify that $\{J_1,J_2\}$ is a complex cause of $(B = 0)$ in $M_1$. Hence, we get $\halpern(M_1,(B = 0)) = \{J_1,J_2,B\}$, which is intuitively reasonable when we come from a cause-as-contribution starting point. After all, Bob dies of poison, as administered to him by both Jane and Julie. However, moving to $M_2$ we encounter the problemtic case, where $\halpern$ agrees with the but-for theory. Specifically, $\{J_1,J_2\}$ fails to satisfy the minimality requirement, since assigning $0$ to $J_2$ suffices to ensure $\neg (B = 0)$. Hence, $J_1$ is no longer considered a cause. 

\section{Preemption, Overdetermination and the Principle of Presumption}\label{sec:3}

In my terminology, the Jane-Julie example points to an anomaly regarding how the $\halpern$ theory draws the line between overdetermination and preemption. To make this formally precise, the first step is to extract from the {\halpern} theory how counterfactual reasoning is used to analyse causes. The scope of permitted interventions, taking us away from the actual state that yields $\phi$, is made clear in AC2 and AC3. Combined, these conditions effectively say that we may intervene at will on the elements of complex causes of $\phi$, while interventions outside these sets must be limited to keeping variables fixed at their actual values.\footnote{We will make this formally precise in a moment.} How to justify this is a matter I will not get into here, except by noting that it can be seen as an application of the so-called {\it similarity principle}, whereby counterfactuals need to be evalutated relative to states of affairs that are as close as possible to the actual state of affairs.\footnote{See generally \cite{lewis73}.} To intervene by changing the values of complex causes is necessary to apply but-for reasoning, to obtain evidence of difference-making. Meanwhile, keeping the values of other variables fixed can be justified directly as an effort to keep the counterfactual scenario as close as possible to the actual one.\footnote{This glosses over deeper issues about whether our insistence on similarity of particular facts -- the values of variables -- risks undermining deeper similarities, namely compliance with the equation associated with each variable (an equation that can be overruled by an intervention that keeps a variable fixed at its actual value). However, I will not get into this problem here; I do not see how my approach does much to resolve it one way or the other.}

Encoding this formally, we now define the collection of models obtained by intervening only on {\it some} variables from a specific set $\vset$, while keeping {\it some} variables from its complement, $c(\vset)$, fixed at their actual values. A conceptual premise of the {\halpern} definition is that interventions of this kind span a space of counterfactuals that we need to consider in order to reason about causality. To understand the formalisation below, recall that $[\assign{\vec V}{\vec v}]$ updates the equation for all $V_i \in \vec V$ with the constant value $v_i \in \vec v$ while leaving the equations associated with other variables $V_j \in V$, for which there is no corresponding $v_j \in \vec v$, unchanged. That is, we quantify over {\it partial} interventions, allowing considerations of counterfactuals where an arbitrary set of variables is left undisturbed. These variables are free to change compared to their actual value, if their equations so dictate in view of interventions elsewhere in the system. Effectively, this can be understood as encoding a form of {\it uncertainty} about how to understand the similarity-constraint, forcing us to consider a larger space of contrary-to-fact models, defined as follows:

\begin{equation}\label{vset}
\mods(\vset) = \{M' \mid \exists \vec v \in \range(\vset), \exists \vec w \in \range(c(\vset)): M_{[\assign{\vset}{\vec v},\assign {c(\vset)}{\vec w}]} = M' \text{ and } M \models (c(\vset) = \vec w)\}
\end{equation}
Intuitively, $\mods(\vset)$ collects all those models that can be obtained from $M$ by assigning arbitrary range-permitted values to some arbitrary subset of $\vset$, while holding an arbitrary set of variables from $c(\vset)$ fixed at their actual value in $M$.

We are now ready to provide a formal definition of overdetermination and preemption, as well as the notion of a {\it putative cause}. The definition is theory-neutral, taking an arbitrary causal theory as a parameter.

\begin{definition}\label{def:putative}
	Given a theory of causality, $\cause$, we say that:
	\begin{itemize}
		\item $V$ is an {\it overdetermining} cause of $\phi$ in $\model$ if $V \in \cause(M,\phi)$ and $V \not \in \butfor(M,\phi)$.
		\item $V$ is a {\it putative} cause of $\phi$ in $\model$ if $\exists \model' \in {\cal \model}(\cause(M,\phi))$ such that $S_M(V) = S_{\model'}(V)$ and $V \in \butfor(\model', \phi)$.
\item $V$ is a {\it preempted} cause of $\phi$ in $M$ if $V$ is a putative cause of $\phi$ in $M$, but not a cause of $\phi$ in $M$.

%
	\end{itemize}
\end{definition}

According to this definition, overdetermination occurs whenever the theory of causality recognises a cause of $\phi$ in $\model$ that does not satisfy the but-for test with respect to $(\model, \phi)$. This notion of overdetermination makes sense since it is paramterised by a causal theory. In standard usage, the notion of overdetermination picks out causes that fail to be necessary for the effects they contribute to \cite{schaffer14}. If we also have additional expectations about what overdetermining causes look like, we should keep in mind that the causal theory alone determines what we count as causes, c.f., Definition \ref{def:causal}. Hence, if we are serious about providing a theory-neutral definition, we cannot include further constraints in our definition of overdetermination. Additional constraints must be regarded instead as proposed {\it principles} of overdetermination, which may or may not bear further scrutiny.\footnote{For instance, it is common in the literature to say that overdetermination occurs when there are two or more {\it distinct} sets of causes that are both {\it sufficient} for the outcome \cite{bernstein16}. Clearly, no member of such a set is a but-for cause, so if our theory of causality classifies these elements as causes in the first place, Definition \ref{def:putative} correctly identifies them as overdetermining causes. However, if our theory of causality is more or less permissive, for instance by dropping the sufficiency requirement, the {\it theory} could diverge from the judgements intuitively expected. However, this is hardly evidence of a problem with our definition of overdetermination. Rather, it signals that the notion of overdetermination has been used elsewhere in a way that is {\it not} theory-neutral.}

The notion of a putative cause, and the resulting definition of preemption, is in need of a more substantive justification. Intuitively, the idea is that $V$ is a putative cause of $\phi$ if $V$ {\it would have been} a but-for cause of $\phi$ if some causes of $\phi$ had taken different values and some other variables had remained fixed. From an empirical perspective, $V$ will eventually reveal itself as a smoking-gun cause of $\phi$, provided you start a process of examination where you gradually ``remove'' more and more known causes of $\phi$.\footnote{Of course, {\it removing} a cause in this context means changing the value of a variable, not removing the whole variable from the system (which would amount instead to designing a new system). Hence, there might be many alternative values to try out, some of which seem intuitively irrelevant or misleading, for instance because these new values themselves could be {\it new} causes of $\phi$. In my view, this possibility does not raise any fundamental problems, although anomalous models can certainly be created by exploiting this phenomenon. However, I believe the impact of this is limited to highlighting the usefulness of the concept of normality developed in earlier work by Halpern and others. This concept is based on the acknowledgement that we might have to refine our application of contrary-to-fact causal theories by restricting attention to a certain subset of the counterfactual domain, e.g., to consider only values we know to be either neutral or conducive to $\neg \phi$. A generalisation of the concepts presented in this article could then be given accordingly.}

This kind of approach corresponds closely with how we reason intuitively about causation when events {\it seem to be} overdetermined. For instance, when we justify that having voted for a winning candidate was causally significant, we direct attention to the {\it what if}-scenario where the winner enjoys less support, so that they would have lost if we had voted differently. Even if this very far from the actual distribution of votes, most of us are willing to accept this kind of make-believe as a sound argument for causal significance. If you voted for Trump in Pennsylvania or Florida, or even Texas, most of us would agree that you did indeed contribute to him becoming President. At least, this would be a natural starting point, justifying us in concluding that your vote was a putative cause.

But what if you voted for Trump in New York? After all, Trump lost the state of New York, so it seems wrong to say that those who voted for him there actually contributed to his victory. Indeed, even though there is an alternative distribution of votes that would have made you a but-for cause in this case as well, you are {\it no longer} classified as a putative cause according to my definition. The reason is that in order to make you a but-for cause, we would have to intervene also on the votes of people who voted for another candidate, changing those into votes for Trump. It seems clear that this is a highly questionable leap of make-believe to undertake when looking for the actual causes of Trump's victory. In keeping with the principle of similarity, it should not be permitted.

The most contentious aspect of how we use counterfactuals concerns the interventions we perform when we force non-causal variables to stay the same even if other changes {\it would have} made them different, according to the model. The issue that arises here can be illustrated by considering a recalcitrant voter $A$ who decides that he will always vote the exact opposite of $B$ (we assume $A$ has the necessary knowledge about $B$ to implement this scheme).\footnote{A famous Norwegian football coach has stated publicly that he implements this strategy vis-a-vis his brother.} Now, if $B$ votes Trump, is $B$ a cause of Trump winning? The answer does not seem obvious. For the purposes of this paper, I accept the {\halpern} analysis that takes $B$ to be a cause, witnessed by the counterfactual scenario where $A$ {\it does not} change his vote to cancel out $B$. 


As indicated already, my understanding of what counts as a putative cause tracks how the {\halpern} theory makes use of counterfactual reasoning. This might not be immidiately obvious from Definition \ref{def:hpcause}, so I record it as a proposition.

\begin{proposition}\label{prop:1}
For all models $M$, formulas $\phi$ and variables $V$, if $V \in \halpern(M, \phi)$, then $V$ is a putative cause of $\phi$ in $M$ with respect to $\halpern$.
\end{proposition}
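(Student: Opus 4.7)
The plan is to take $V \in \halpern(M,\phi)$, unpack the complex-cause witness that certifies this via Definition \ref{def:hpcause}, and convert it into a model $M'$ that shows $V$ is a putative cause in the sense of Definition \ref{def:putative}. So pick some $\vec X \ni V$ that is a complex cause of $\phi$ in $M$, together with a witness $(\vec x, \vec W, \vec w)$ satisfying AC1--AC3 (tacitly assuming $\vec W \cap \vec X = \emptyset$, which is standard when such assignments are composed). Write $V = X_k$, let $x_k$ be the corresponding component of $\vec x$, and let $\vec x'$ be the restriction of $\vec x$ to $\vec X \setminus \{V\}$. Finally, set $\vset = \halpern(M,\phi)$ and observe that $\vec X \subseteq \vset$ by construction of $\halpern$.

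The candidate witness model is
$M' = M_{[\assign{\vec X \setminus \{V\}}{\vec x'}, \assign{\vec W}{\vec w}, \assign{V}{S_M(V)}]}$,
obtained by performing the complex-cause interventions except on $V$, which is instead frozen at its actual value. To see that $M' \in \mods(\vset)$, note that $\vec X \setminus \{V\}$ and $V$ lie in $\vset$, so they may receive arbitrary range-permitted values under the $\vset$-clause of the definition of $\mods(\vset)$, while $\vec W$ is intervened on only at its actual value $\vec w$ (since $M \models \vec W = \vec w$), which is compatible with the $c(\vset)$-clause, and also with the $\vset$-clause for any $\vec W$-variables that happen to lie inside $\vset$. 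The construction immediately gives $S_{M'}(V) = S_M(V)$.

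It remains to verify $V \in \butfor(M', \phi)$. The easy half is $M' \models [\assign{V}{x_k}]\neg \phi$: composing this additional update with those defining $M'$ (and noting that the fresh $V$-assignment overrides the frozen one) yields exactly $M \models [\assign{\vec X}{\vec x}, \assign{\vec W}{\vec w}]\neg \phi$, which is the AC2 clause of the chosen witness. The delicate half, which I expect to be the main obstacle, is establishing $M' \models \phi$ in the first place. For this I invoke AC3: since $\vec X$ is minimal, the strictly smaller set $\vec X \setminus \{V\}$ cannot be a complex cause of $\phi$ in $M$, and since AC1 holds trivially, AC2 must fail for every admissible choice of auxiliary variables and setting. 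Applied with $\vec W' = \vec W \cup \{V\}$ at the actual-value setting $\vec w' = \vec w \cup \{S_M(V)\}$ and with the specific setting $\vec x'$ of $\vec X \setminus \{V\}$, this failure of AC2 delivers $M \models [\assign{\vec W'}{\vec w'}, \assign{\vec X \setminus \{V\}}{\vec x'}]\phi$, which is exactly $M' \models \phi$. Choosing $x = x_k$ then witnesses $V \in \butfor(M',\phi)$, completing the verification that $V$ is a putative cause of $\phi$ in $M$ with respect to $\halpern$.
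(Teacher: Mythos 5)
Your proof is correct and takes essentially the same route as the paper's: freeze $V$ at its actual value while performing the remaining complex-cause interventions, use AC3 (minimality) to establish $M' \models \phi$ and AC2 to get the but-for flip, and split $\vec W$ into its causal and non-causal parts to place $M'$ in $\mods(\halpern(M,\phi))$. You in fact make the minimality step more explicit than the paper, which merely asserts that each $X \in \vec X$ is a but-for cause in the corresponding frozen model.
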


\begin{proof}
Let $\vec X$ witness to the fact that $V \in \halpern(M, \phi)$. That is, $V \in \vec X$ and $\vec X$ is a minimal set of variables such that AC1 and AC2 holds, c.f., Definition \ref{def:hpcause}. It follows that $X \in \halpern(M, \phi)$ for all $X \in \vec X$. Let $\vec x, \vec W, \vec w$ witness to the fact that $\vec X$ satisfies AC1 and AC2. It then follows by minimality of $\vec X$ that all $X \in \vec X$ are but-for cause of $\phi$ in $M_1(X) = M_{[\assign{\vec X}{\vec x}, \assign{\vec W}{\vec w},\assign{X}{x}]}$ where $x$ is the actual value of $X$, i.e., $M \models (X = x)$. That is, $M_1(X)$ is the model witnessing to $\vec X$ being a complex cause of $\phi$ in $M$, except that $X \in \vec X$ is fixed at its actual value. Recall that $M \models (\vec W = \vec w)$ by AC2. Let $\vec F = \vec W \cap \halpern(M, \phi)$ collect all the elements of $\vec W$ that are \halpern-causes of $\phi$ in $M$. Define $\vec y$ such that it extends $\vec x$ by $w_i$ for all $W_i \in \vec F$. Moreover, define $\vec z$ as the restriction of $\vec w$ obtained by removing all $w_i$ for $W_i \in \vec F$. By construction, we then get $[\assign{\vec X}{\vec x}, \assign{\vec W}{\vec w}] = [\assign{\halpern(M,\phi)}{\vec y}, \assign{c(\halpern(M, \phi))}{\vec z}]$. Hence, $M_1(V) = M_{[\assign{\halpern(M,\phi)}{\vec y}, \assign{c(\halpern(M, \phi)}{\vec z}, \assign{V}{v}]}$ (where $M \models (V = v)$. Since $V$ is a but-for cause of $\phi$ in $M_1(V)$, this witnesses to the fact that $V$ is a putative cause of $\phi$ in $M$, c.f., Definition \ref{def:putative}.
\end{proof}

What this means is that the notion of putative causes faithfully captures an upper limit to what the $\halpern$ theory is willing to designate as a cause. Combined with the but-for test, which acts as a lower limit, we can now define a class of causal theories.

\begin{definition}\label{def:sim}
A causal theory is said to be {\it similarity-based} if for every $M, \phi$, it recognises at least all but-for causes of $\phi$ and at most all putative causes of $\phi$.
\end{definition}

In view of Proposition \ref{prop:1}, it follows that {\halpern} is a similarity-based causal theory. The but-for theory is clearly the unique minimal similarity-based causal theory. There are many distinct maximal theories, as the reader can verify. Interestingly, the \halpern-theory is {\it not} among them; there are putative causes that arise from the \halpern-theory without being designated as causes according to that theory. Or to put it differently, the \halpern-theory recognises preemption.

This is reasonable, as illustrated by the typical example of preemption in the literature: Suzy and Billy are both throwing a rock at a bottle. Suzy throws first and the bottle shatters; Billy throws second and he is accurate, so the bottle {\it would have} shattered if Suzy had missed or decided not to throw. However, we are not prepared to say that Billy is an actual cause of the bottle shattering; he is only the ``back-up'' cause. According to Definition \ref{def:putative}, applied to the standard {\halpern} analysis of the scenario, he is a preempted cause.

The model is depicted in Example \ref{ex:suzy} below. The important thing to note is that changing the value of $S$ to $0$, to reach the counterfactual model where Suzy does not throw her rock, is enough to make $B$ a but-for cause. Hence, if our theory regards Suzy as a cause of the bottle shattering, Billy's throw becomes a putative cause of the same. Under the {\halpern} analysis, the minimality constraint kicks in here, to block Billy from being regarded as an actual cause. To see this, note that if we keep $H_B$ fixed at its actual value, to encode that Billy's rock does not hit the bottle, then changing $S$ to $0$ is {\it sufficent} to change the outcome so that the bottle remains intact. In view of this, the set $\{S,B\}$, which would otherwise witness to $B$ being a cause, fails to satisfy the minimality constraint; according to the {\halpern} theory, $\{S\}$ is also a complex cause, and it is strictly smaller.

\begin{example}
	\label{ex:suzy}
Consider the model $M$ depicted below, with binary-valued variables and context $U_1=U_2=1$ (both Suzy and Billy throw a rock):
$$
\begin{array}{cc}
	\xymatrix{ S \ar[d]&&  B \ar[d] \\
		H_S \ar[dr] \ar[rr] && H_B \ar[dl] \\
		& G &} & \begin{array}{ll}
		S = U_1, B = U_2 \\
		H_S = S \\ 
		H_B = B - H_S \\
		G = \max\{H_S,H_B\}
	\end{array}
\end{array}
$$
We observe that $M \models (G = 1)$, the bottle breaks. Moreover, we observe that there is no non-trivial but-for cause of $(G = 1)$. Specifically, changing either $S$ or $H_S$ will not change the outcome (since then Billy's throw shatters the bottle). The {\halpern} analysis therefore relies on keeping $H_B$ fixed at its actual value, to reach a counterfactual scenario where $S$ becomes a but-for cause of $(G = 1)$.
\end{example}

The {\halpern} analysis gives us the right result, but it lacks general appeal and has little or no explanatory power. As to the general appeal, what is the justification for the minimality constraint, when we know that the notion of necessity provides no secure footing for a theory of causality? While it is obvious that we need {\it some} constraint that blocks us from attributing causal relevance to irrelevant variables, I have not seen any argument that defends minimality as a principle of causal reasoning. As to explanatory power, the problem is that the minimality constraint forces us to analyse the case of Suzy and Billy in an unnatural way. To illustrate this, first note that the similarity constraint does not entitle us to presume that a counterfactual intervention to ensure that Billy misses the bottle is preferable to a counterfactual intervention whereby Billy does not throw at all. Quite the contrary, the standard {\halpern} model of this scenario encodes a world where Billy is necessarily accurate -- there is no setting of the exogeneous variables that results in Billy throwing and not hitting the bottle, unless Suzy has already broken it. In view of this, the intervention that makes Billy miss the bottle even though Suzy does not throw a rock creates a severe disparity between the counterfactual and the actual. Indeed, the state needed to verify that Billy is not a cause is {\it impossible} according to the model. Under the {\halpern} theory, therefore, we need to endorse the view that HP models are inaccurate, in order to explain preemption.

What could be a more reasonable justification? In my opinion, we do better when we replace minimality-talk by noting that $H_B$ counts as {\it evidence} of non-causation. Since Billy did not {\it actually} hit the bottle, he did not contribute to breaking it! This should be preferable to an explanation that invokes the spectre of necessity. Granted that this is true, is what I am proposing only a gloss on how we {\it describe} the workings of {\halpern} theory? I believe not, at least not when we follow our line of thought to the conclusion that {\it preemption requires proof}. What this means is that we should not designate putative causes as non-causes unless there is some variable whose actual value witnesses to the fact that the putative cause has been blocked.


This is exactly where the {\halpern} theory fails to make sense of the Jane-Julie example encoded in $M_2$ of Example \ref{ex:1}. We have already seen that Jane comes out as a preempted cause in this model, but where is the proof that she has been blocked? The only candidate is Julie, but the idea that Julie's action can count as evidence that Jane's poison has been blocked is hard to accept; all we know is that Julie is a but-for cause of Bob's death, while Jane is not. But this is a theory-dependent comparative difference; it gives us no reason to think that Julie is actually blocking Jane. If she is doing this, at least we seem entitled to expect a witness of such a mechanism in the form of an additional variable. As the {\halpern} theory fails to require this, I conclude that an evidential understanding of preemption, in line with the preferred explanation of what is happening in the Suzy-Billy example, is simply not consistent with the theory.

As mentioned already, the {\halpern} theory behaves particularly strangely on Jane-Julie examples, since it {\it does} regard Jane as a cause of Bob's death in some cases, when the range of Julie is restricted. Moreover, if we replace Julie by Mary and Mia, who give Bob 99\% of a full dose of poison each, then suddenly Jane is regarded as a cause of Bob's death (part of a cause in Halpern's terminology). This is so even though the contribution made by Jane seems less significant now, as a proportion of the total amount of poison given to Bob. Moreover, Mary and Mia together more than fulfill all those characteristics of Julie in $M_2$ that could perhaps justify regarding Jane as a preempted cause. 

The reason why minimality leads to a different result when Julie is replaced by Mary and Mia is that there are now three minimal sets satisfying AC1 and AC2 of Definition \ref{def:hpcause}. Two of these sets include Jane and one of Mary and Mia. The total amount of poison covered by these two sets is significantly less than that covered by Mary and Mia together. However, the fact that the sets including Jane are not strict subsets of the Mary-Mia set renders minimality impotent, so that Jane comes out as a cause after all. On an evidential understanding of what it means to designate something as a preempted cause, this difference in treatment appears unmotivated.

So far, we have only considered an example, suggesting that the {\halpern} theory fails to make sense of it. The principle at stake is that preemption requires proof, so the natural follow-up question is how this principle should be formalised so that it may be applied generally. Intuitively, we want to ensure that no putative cause is excluded unless there is some variable witnessing to the fact that the putative cause has been blocked. What does this mean? Since we identify causes by locating counterfactual worlds where they are but-for causes, a variable that blocks $V$ must be a variable that has to change value in order for $V$ to become a but-for cause. From this, it follows that we need an additional requirement to arrive at a sensible principle: we need to insist that a witness of non-causality must be found among those variables that are {\it not regarded as causes} of $\phi$ in the actual situation. If we do not insist on this, then our principle does not allow us to draw any meaningful distinction between overdetermination and preemption. Indeed, if $A$ and $B$ were classified as overdetermining causes, we would also be entitled to say that $B$ blocks $A$ and vice versa. If all the evidence we need to designate $A$ as a preempted cause of $\phi$ is that some variable has to change in order for $A$ to become a but-for cause, then all overdetermining causes are at risk of being classified as preempted ones.

The requirement that evidence of preemption must be grounded in non-causal variables is also needed to avoid circularity. If $A$ is a cause of $\phi$ and we allow it to function as the sole witness that a putative cause $B$ does not cause $\phi$, there is no theory-independent evidence to verify that our judgement about $B$ is correct. Hence, there is no theory-independent evidence to prove that our theory correctly adheres to the principle that preemption requires proof. To see this, note how $A$ could block $B$ merely by virture of causing $\phi$, according to our theory. This is what we did not want to accept in the Jane-Julie example. To prevent it in general, we have to demand non-causal evidence of preemption. It is important to note that causes of $\phi$ can certainly contribute to blocking putative causes of $\phi$. However, to avoid circularity and make sense of how preemption is related to overdetermination, we need additional evidence in these cases.

This can be understood as a demand directed at the modeller; to recognise preemption, we need models that are sufficiently fine-grained to represent the {\it mechanism} by which a putative cause has been blocked. This is exactly what we saw in the Billy-Suzy case, where the standard {\halpern} model needs to include an auxiliary variable, $H_B$, to reach the result that Billy has been blocked (a more naive model will regard both Suzy and Billy as causes). There is no reason why the same demand should not be made in the Jane-Julie cases; if there is really some mechanism whereby Julie's larger dose of poison renders Jane's dose causally insignificant, then the modeller has to include this information in the form of non-causal evidence of non-causation.

We are now ready to formalise the requirement that preemption requires proof, a principle we will refer to as {\it presumption}. It is essentially a principle of default reasoning; unless there is independently verifiable evidence to the contrary, a putative cause should always be regarded as a cause.

\begin{definition}\label{The presumption principle}
A causal theory $\cause$ satisfies the principle of presumption if for all models $M$, all formulas $\phi$ and all variables $V$, we have the following:
\begin{itemize}
\item If $V$ is a preempted cause of $\phi$, then for all $M' \in {\cal M}(\cause(M,\phi))$ such that $S_{M'}(V) = S_M(V)$ and $V \in \butfor(M',\phi)$, there is $W \not \in \cause(M, \phi)$ such that $S_M(W) \not = S_{M'}(W)$.
\end{itemize}
\end{definition}

The evidence of preemption that we need is provided by the $W$'s in the definition above; we need one for every similarity-based counterfactual where $V$ comes out as a but-for cause. The actual value of $W$ effectively blocks the causal contribution of $V$ in the actual situation, relative to at least one counterfactual where $V$ is a but-for cause of $\phi$. Or, to put it differently: no matter how we modify known causes of $\phi$ (while holding some non-causes fixed), as soon as we modify so many that $V$ becomes a but-for cause, at least one non-causal $W$ will also have changed, witnessing to $V$ having been blocked in the actual situation.

The example of Jane and Julie in model $M_2$ shows that the {\halpern} theory does not satisfy the presumption principle. As mentioned in the introduction, the predecessors of the {\halpern} theory judge this example differently, satisfying the principle of presumption in this case. A more complicated example for which the previous version of {\halpern}-causality also violates presumption is given below.

\begin{example}\label{ex:2}
Imagine that Jane is deciding whether to poison Bob, while June and Julie are deciding independently whether to give him a dose of antidote, just in case. Unfortunately for Bob, the antidote has a peculiar property: two doses behave like one dose of poison. As it happens, Jane gives Bob poison, while both June and Julie give him an antidote. To model this, we can use $M$ depicted below, where all variables are binary-valued and the context is given by $\vec u = (1,1,1)$.
$$
\begin{array}{cc}
\xymatrix{ J_1 \ar[dr]& J_2 \ar[d] \\
	J_3 \ar[r] & B } & \begin{array}{ll}
J_1 = U_1, J_2 = U_2, J_3 = U_3 \\
B = \begin{cases} 1 - \max\{J_1,J_2\} \text{ if } J_2 = J_3 \\
					1 \text{ otherwise }
	\end{cases}
\end{array}
\end{array}
$$
So $M \models (B = 0)$, since $S_M(J_1) = S_M(J_2) = 1$ so that $S_M(B) = 1 - S_M(J_1) = 0$. Moreover, we have $M \models [\assign{J_2}{0}]\neg (B = 0)$ and $M \models [\assign{J_3}{0}] \neg (B = 0)$. So both $J_2$ and $J_3$ are but-for causes of $(B = 0)$ in $M$. However, observe that we have $M \models [\assign{J_1}{0}](B = 0)$, since $1 = max\{J_1,J_2\}$ is still the case when $J_1$ is assigned $0$. It follows that $J_1$ is not a but-for cause of $(B = 0)$. Hence, by the minimality restriction, $J_1$ is not an $\halpern$ cause either. But this cannot be right. We know $J_1$ has the same effect as $J_2$ and $J_3$ combined in this case, so it stands to reason that $J_1$ should also be regarded as making a causal contribution to $(B = 0)$. Indeed, the antidotes do not work, since they behave like poison.
\end{example}

The causal judgements prescribed for this example seem hard to justify. Jane gives Bob a deadly dose of poison, but she is still not considered a cause of his death. The reason for this cannot be that the two antidotes preempt Jane's contribution. Two antidotes behave like a single dose of poison, they do not trump it. Rather, one seems forced to conclude that Jane is not considered a cause because June and Julie {\it could have} coordinated their actions in such a way that Jane's poison would not have been deadly. But this cannot be a sound form of causal reasoning; the mere possibility of prevention should not block causal attributions. However, as Example \ref{ex:2} shows, there are cases when the minimality constraint of the $\halpern$ definition does exactly that. Similarly, the invariance constraint that is used to define causality in \cite{halpern05}, the previous version of {\halpern} causality, yields the same result. The very first {\halpern} definition from \cite{halpern01} manages to get things right in this case, but other problems arise, showing that this definition cannot be accepted; it is simply too liberal about what it regards as causes \cite{hopkins03}.
\noo{
In the next section, I consider the converse of presumption, leading to a fixed-point characterisation of causal theories that I call {\it empirical}. In addition, I present some technical results that are useful when exploring this class of causal theories.

\section{Empirical Theories, Some Preliminary Results}

Arguably, causal theories in the {\halpern} tradition ought to be similarity-based. Moreover,}

To my knowledge, there is no widely endorsed theory of causality for the HP framework that satisfies the principle of presumption. What about the converse of presumption: is it reasonable to demand that putative causes should not be regarded as actual causes when non-causal blockers can be identified? In my opinion, this is a reasonable requirement at the theoretical level, as long as the quality of evidence is not in question. This position leads naturally to the following fixed-point characterisation, picking out a class of causal theories.

\begin{definition}\label{def:empirical}
A causal theory $\cause$ is {\it empirical} whenever the following holds, for all $M, \phi, V$:
$$
\begin{array}{l}
V \in \cause(M, \phi) \Leftrightarrow \exists M' \in {\cal M}(\cause(M, \phi)): \\ \big( S_M(V) = S_{M'}(V) \text{ and } V \in \butfor(M', \phi) \text{ and } \forall W \not \in \cause(M, \phi): S_M(W) = S_{M'}(W) \big)
\end{array}
$$
\end{definition}

We record the following simple observation about empirical theories (proof omitted).

\begin{proposition}\label{prop:suff}
If $\cause$ is an empirical causal theory, then it is similarity-based and it satisfies the principle of presumption.
\end{proposition}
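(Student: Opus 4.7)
The plan is to unpack the empirical fixed-point condition in both directions and verify each of the two conclusions (similarity-based and presumption) as essentially direct rewrites. No substantive causal reasoning is needed beyond careful bookkeeping.

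For the similarity-based clause, I would handle the two inclusions separately. For the upper bound (every cause is a putative cause), I would take an arbitrary $V\in\cause(M,\phi)$ and apply the ($\Rightarrow$) direction of Definition \ref{def:empirical}: this directly produces an $M'\in\mods(\cause(M,\phi))$ with $S_M(V)=S_{M'}(V)$ and $V\in\butfor(M',\phi)$, which is exactly the clause of Definition \ref{def:putative} making $V$ a putative cause. For the lower bound (every but-for cause is a cause), I would take $V\in\butfor(M,\phi)$ and verify that the ($\Leftarrow$) direction of the empirical fixed point applies with the trivial witness $M'=M$. For this I need that $M$ itself lies in $\mods(\cause(M,\phi))$, which follows from Equation \eqref{vset} by taking both partial assignments $\vec v$ and $\vec w$ empty, so that $M_{[\assign{\cause(M,\phi)}{\vec v},\assign{c(\cause(M,\phi))}{\vec w}]}=M$. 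Then $S_M(V)=S_M(V)$ and $V\in\butfor(M,\phi)$ by assumption, while the third conjunct $\forall W\notin\cause(M,\phi)\colon S_M(W)=S_M(W)$ is vacuous. The empirical condition then yields $V\in\cause(M,\phi)$.

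For the principle of presumption, I would argue by contraposition of the ($\Rightarrow$) direction of empirical. Suppose $V$ is a preempted cause of $\phi$ in $M$, so $V$ is putative but $V\notin\cause(M,\phi)$. Since $\cause$ is empirical and $V\notin\cause(M,\phi)$, the right-hand side of Definition \ref{def:empirical} must fail: there is no $M'\in\mods(\cause(M,\phi))$ simultaneously satisfying $S_M(V)=S_{M'}(V)$, $V\in\butfor(M',\phi)$, and $S_M(W)=S_{M'}(W)$ for all $W\notin\cause(M,\phi)$. Equivalently, for every $M'\in\mods(\cause(M,\phi))$ that does satisfy the first two conjuncts, the third must fail, i.e.\ there exists $W\notin\cause(M,\phi)$ with $S_M(W)\neq S_{M'}(W)$. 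This is verbatim the conclusion required by Definition \ref{The presumption principle}.

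The only mildly subtle step is the one used in the lower bound of similarity-basedness, namely that $M\in\mods(\vset)$ for every $\vset\subseteq\ens$; I would expect the main (very small) obstacle to be making sure the convention that $[\assign{\vec X}{\vec x}]$ permits the empty assignment is invoked cleanly, since the rest of the argument is purely a restatement of the empirical fixed point.
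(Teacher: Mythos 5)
Your proof is correct, and since the paper states this result with ``proof omitted'' it is precisely the direct unpacking of Definition \ref{def:empirical} that the author intends: the forward direction of the fixed point gives the putative-cause upper bound and (by contraposition) the presumption clause verbatim, while the backward direction with the trivial witness $M'=M$ gives the but-for lower bound. The one point you flag is indeed the only place needing care, and it goes through because the paper explicitly quantifies over \emph{partial} interventions in Equation \eqref{vset} (so the empty assignment yields $M\in\mods(\vset)$ for every $\vset$); note that the alternative of fixing every variable at its actual value would \emph{not} work, since turning all equations into constants changes which variables pass the but-for test, so your choice of the empty witness is not merely convenient but necessary.
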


It remains for future work to investigate further the class of empirical causal theories. Suffice it to say that in my opinion, Proposition \ref{prop:suff} summarises why the class is worth exploring. 
\noo{
Looking at the right-hand side of the equivalence used to define empirical theories in Definition \ref{def:empirical}, we see a new collection of counterfactual models derived from $M$. We can think of this as the set of models related to $M$ by a process of causal analysis aiming to check whether the causes of $\phi$ have been identified correctly. The process consists of the following steps: changing the values of causal variables, holding some variables fixed, applying the but-for test, and checking that no non-causal variables have changed. Formally, we can capture this by defining the following relation on models, relative to a given formula.

\begin{definition}\label{def:crel}
Given a causal model $M_1$ and a basic formula $\phi$, we say that $M_1 \crel \cause \phi M_2$ if 
$$M_2 \in {\cal M}_1, M_2 \models \phi \text{ and } M_2 \models (X = x) \Leftrightarrow M_1 \models (X = x)
$$
for every $X \in \ens \setminus \cause(M_1, \phi)$.
\end{definition}

In view of how we arrived at the definition of $\crel \cause \phi$, the following observation should come as no surprise (proof omitted).

\begin{proposition}\label{prop:char}
A causal theory $\cause$ is empirical if, and only if, for all $M, \phi, V$:
$$
V \in \cause(M, \phi) \Leftrightarrow \exists M_2: M \crel \cause \phi M_2 \text{ and } S_M(V) = S_{M_2}(V) \text{ and }V \in \butfor(M_2, \phi)
$$
\end{proposition}

Intuitively, if $M_1 \crel \cause \phi M_2$, then $M_2$ is a $\phi$-model that is both similar to $M_1$ and causally weaker than $M_1$. The reason for saying that $M_2$ is causally weaker is that only causes of $\phi$ are allowed to change compared to $M_1$. Hence, if new causes of $\phi$ are added, then they can be attributed solely to the replacement of existing ones. Some might argue that I am making too much of this fact when I say that $M_2$ is causally weaker than $M_1$. However, I think it is appropriate to do so. First, the terminology serves to highlight that there is more going on in getting from $M_1$ to $M_2$ than the mere identification of a similar state; we are also trying to avoid the introduction of new causes of $\phi$.  Indeed, the intuitive reason why states like $M_2$ need to be considered is that they weaken the designated causes of $\phi$ in $M_1$ to such an extent that the but-for test can be meaningfully applied to check that these causes have been correctly identified. The question remains whether we remain true to this starting point; is there a sense in which $M_2$ is really ``weaker'' than $M_1$ when we follow the heuristic described above Definition \ref{def:crel}? This question leads us to formalise the following property, which essentially stipulates that our preferred terminology accurately reflects the behaviour of our causal theory.
\noo{
The relation $\crel \cause \phi$ allows us to formulate properties of causal theories and reason about them more compactly. In the following, we present two such properties and show that they imply empiricism of the underlying causal theory. We begin by defining the concept of {\it closure}, which arises from running with the idea that $\crel \cause \phi$ is supposed to take us to causally weaker states.
}
\begin{definition}[Closure]\label{def:closure}
	A causal theory $\cause$ satisfies closure if for all $M, \phi$, we have
	\begin{equation*}
	\forall M': \big(M \crel \cause \phi M' \Rightarrow \cause(M,\phi) \supseteq \cause(M',\phi)\big)
	\end{equation*}
\end{definition}

Intuitively, if $\cause$ satisfies closure, it means that when we move from the actual state to counterfactual states that are both similar and causally weaker, we do not encounter new causes of $\phi$. This effectively explains why we are entitled to refer to these states as being causally weaker states. If our theory is similarity-based, this behaviour also makes good intuitive sense; to determine the causes of $\phi$ in the actual world, we look for but-for causes of $\phi$ in similar worlds. If we find such causes, we should regard them as actual causes unless there is proof to the contrary. The principle of closure states that this reasoning can be applied to all causes of $\phi$ in similar worlds, not just but-for causes. In view of this, the following result is not that surprising.

\begin{proposition}\label{prop:2}
For all similarity-based $\cause$, if $\cause$ satisfies closure, then $\cause$ satisfies the principle of presumption.
\end{proposition}

\begin{proof}
Assume that $\cause$ is similarity-based and that $\cause$ satisfies closure. For all $M_2$ such that $M \crel \cause \phi M_2$, we must demonstrate that if $V \in \butfor(M_2, \phi)$ then $V \in \cause(M, \phi)$. Assume towards contradiction that this is not the case for some $M_2, V$. That is, we have $V \in \butfor(M_2, \phi)$ and $V \not \in \cause(M, \phi)$. Since $\cause$ is similarity-based, we have $V \in \cause(M_2, \phi)$. Hence, $\cause$ does not satisfy closure, a contradiction.
\end{proof}

To conclude this section, we relate closure to another interesting property, namely transitivity of $\crel \cause \phi$.

\begin{proposition}\label{prop:3} We have the following:
\begin{enumerate}
\item If $\cause$ satisfies closure, then $\crel \cause \phi$ is transitive for all $\phi$.
\item For all empirical $\cause$, if $\crel \cause \phi$ is transitive for all $\phi$, then $\cause$ satisfies closure.
\end{enumerate}
\end{proposition}

\begin{proof}
(1) Let $M^1, M^2, M^3$ and $\phi$ be arbitrary such that $M^1 \crel \cause \phi M^2$ and $M^2 \crel \cause \phi M^3$. For the latter conjunct, there is a witness $\pi = [\assign{\vec X}{\vec x}, \assign{\vec W}{\vec w}]$ such that $M^3 = M^2_\pi$, $\vec X \subseteq \cause(M^2,\phi)$ and $M^2 \models (\vec W = \vec w)$. Let $\cause(M^2,\phi) = \vec Y = \{Y_1,\ldots,Y_m\}$. Since $\cause$ satisfies closure, it follows that $\vec Y \subseteq \cause(M^1,\phi)$. We let $\vec y$ be defined by $y_i = S_{M^3}(Y_i)$ for all $1 \leq i \leq m$. Observe that $\vec X \subseteq \vec Y$ and that $\vec y$ agrees with $\vec x$ on $\vec X$. To proceed, let $\vec Z = \{Z_1,\ldots,Z_n\} \subseteq \ens$ contain all variables $Z_i \in \ens \setminus \cause(M^2,\phi)$ such that $S_{M^1}(Z_i) \not = S_{M^2}(Z_i)$. It follows from the definition of $\crel \cause \phi$ that $\vec Z \subseteq \cause(M^1,\phi)$. We let $\vec z = (z_1, \ldots, z_n)$ be defined by $z_i = S_{M_2}(Z_i)$ for all $1 \leq i \leq n$. By definition of $\crel \cause \phi$, we know that $S_{M^2}(V) = S_{M^3}(V)$ for all $V \in \vec Z$. Finally, let $\vec W' = \vec W \setminus \vec Z$. Observe that for all $W \in \vec W'$, $S_{M^1}(W) = S_{M^2}(W) = S_{M^3}(W)$. In sum, we get $M^3 = M^1_{[\assign{\vec Y}{\vec y},\assign{\vec Z}{\vec z},\assign{\vec W'}{\vec w}]}$, witnessing to the fact that $M^1 \crel \cause \phi M^3$ (with $\vec Y \cup \vec Z \subseteq \cause(M^1,\phi)$). \\
(2) Assume towards contradiction that $\crel \cause \phi$ is transitive without satisfying closure. Then there is $V \in \cause(M_2, \phi) \setminus \cause(M_1, \phi)$ with $M_1 \crel \cause \phi M_2$. Since $\cause$ is empirical, there is $M_3$ such that $M_2 \crel \cause \phi M_3$ and $V \in \butfor(M_3, \phi)$ with $S_{M_2}(V) = S_{M_3}(V)$. But then, by transitivity and the fact that $\cause$ is empirical, $V \in \cause(M_1, \phi)$, a contradiction.
\end{proof}

This result is interesting for two reasons. First, the fact that closure implies transitivity is a useful technical result, telling us that causal theories that satisfy closure are particularly likely to have nice properties. Second, the fact that transitivity and closure is equivalent for empirical theories is conceptually interesting. The $\crel \cause \phi$-relation is supposed to relate $M$ to causally weaker models. If the relation is not transitive, it means that the relation does not behave the way we would expect of a {\it weaker than}-relation. Hence, we have an independent reason to think that $\crel \cause \phi$ {\it should be} transitive. In view of Proposition \ref{prop:3}, this in turn suggests the appropriateness of the closure principle, at least if we accept that causal theories ought also to be empirical.

\noo{
A trivial corollary of Proposition \ref{prop:2} is the following.

\begin{corollary}
If $\cause$ is similarity-based and satisfies closure, then $\cause$ is empirical.
\end{corollary}
}
}

\section{A Remark on Asymmetry}\label{comp}

I have not proposed a new definition of actual causality. Instead, I have formulated {\it principles} that I believe any theory of actual causality (in the HP tradition) should satisfy. This is reminiscent of the approach taken in \cite{beckers16}, where several principles of causality are defined, before the authors go on to propose a new definition of causality that satisfy them. However, while the methodology is similar, the outcome is quite different. In \cite{beckers16}, the authors focus on the concept of {\it production}, generalising the version of this concept introduced in \cite{hall04}. This involves addressing how actual causation depends on time, leading to an extension of the HP framework that includes the formal notion of a ``timing''.

While the relationship between time and causation is no doubt interesting, it is not dealt with explicitly in the present paper, which sticks with the standard HP formalism. More importantly, the key principle postulated in \cite{beckers16} -- referred to as {\it asymmetry} -- stands in opposition to the principle of presumption postulated above. The asymmetry principle captures a specific intuition one might have about certain kinds of models: if $V$ is a binary-valued variable and the outcome we are looking at is $X$, then it should not be the case that intervening only to assign a different value to $V$ results in a model where $V$ is still a cause of $X$. That is, if $V$ counts as an actual cause of $X$ it should no longer count as a cause when we change the value of $V$, as long as we do not intervene on any other variable (so the equations and the context will deteremine the value of other variables). 

The principle of asymmetry is hard to accept in general. If Jane gives Bob half a dose of poison A while Julie gives Bob half a dose of poison B, a person choosing whether to give Bob half a dose of either poison A or poison B will typically come out as a cause of Bob's death either way (even if they are not to blame). Moreover, while it is easy to agree with the authors of \cite{beckers16} that ``the absence of a cause fulﬁlls a different role than the cause itself'', this observation is hardly a justification of asymmetry as defined in that paper. Causes are found by reasoning counterfactually, but the asymmetry principle does not permit us to perform any interventions when looking for evidence of the ``different role'' played by the other value of $V$. By contrast, if the intuition referred to in \cite{beckers16} had been formalised by requiring that asymmetry must be a {\it relevant possibility} whenever $V$ is a cause of $X$, any similarity-based causal theory would trivially satisfy it. After all, similarity-based causal theories satisfy something stronger, namely that $V$ is a but-for cause of $X$ in some relevant counterfactual model, c.f., Definition \ref{def:sim}.


The authors of \cite{beckers16} also argue for the asymmetry principle by relating it to examples where it has been claimed that the HP theory behaves unintuitively. One such example, closely related to the recalcitrant voter mentioned in Section \ref{sec:3}, is the so-called Switch: one imagines two railway tracks leading to the same station and a switch that determines which track the train will use. Is the switch a cause of the train arriving at the station? Intuitively, many would answer that it is not a cause. Since the train arrives regardless of which track it uses, there is no clear evidence of difference-making in this model. However, under the HP theory, the switch is a cause of the train arriving. This attribution clearly violates the asymmetry principle: this switch is a cause {\it regardless} of its value (and regardless of which track the train actually uses).

It is not so obvious that the HP theory gets the Switch wrong. As long as we are not trying to model laws of nature, I am inclined to say the judgement provided is correct. The underlying question raised by the Switch is the following: what counterfactual interventions should be permitted when looking for evidence of difference-making? As I mentioned in Section \ref{sec:3}, this is a question for which the HP theory provides a rather permissive answer. Specifically, by allowing us to intervene at will on causal variables while holding any subset of non-causal variables fixed, the HP theory allows us to instantiate value assignments that are {\it impossible} according to the structural equations in the model (that is, assignments not instantiated by any context). For instance, the witness to the switch being a cause of the train arriving at the station includes the railway track that is not actually used. By insisting that this track is not used even after the switch is changed (permitted by Definition \ref{def:hpcause}), we arrive at a model where the train does not arrive and the switch made the difference. This only makes sense if we allow ourselves to depart from the model by considering additional contingencies. What if, for instance, a tree blocks the railway track that was not used? In this case, the switch clearly makes a causal contribution to the train arriving, as pointed out in \cite{halpern05}.

The question is whether we should be permitted to question the model in this way. If the answer is yes, as in the HP theory, then the asymmetry principle must be rejected. The two railway tracks are then symmetric in the sense that there is the {\it possibility} that they could have been blocked, even if this is not recorded by our model. Hence, the switch is a cause of the train arriving, regardless of its value. The authors in \cite{beckers16} are aware of this objection and deal with it by proposing also a weaker version of their asymmetry principle. This version effectively concedes that the principle does not hold without exception, but requires us to mark exceptions explicitly by introducing special ``non-deterministic'' variables whose values are undetermined whenever asymmetry fails.\footnote{If the context is such that both values of $V$ are causes of $X$, the weak symmetry principle states that there has to be a choice of values for $V$ and the undetermined variables such that $V$ is not a cause of $X$. In this way, a larger space of counterfactuals is introduced through the back door, as it were.} In my opinion, the case for asymmetry remains unconvincing. However, enriching the HP model by allowing partial variable assignments seems potentially appealing. Specifically, it might offer a path to a reasonable theory of causality that only permits counterfactual value assignments that are instantiated by some (partial) context, leaving all equations intact (so that interventions cannot lead to ``impossible'' states).

\noo{As it stands, the HP theory does not satisfy this constraint, effectively forcing us to entertain the possibility that the scenario we consider may have been modelled incorrectly. In my opinion, this is appropriate in most contexts, especially in view of how the uncertainty is clearly restricted: we are only allowed to consider the possibility that observed dependencies (equations) might fail, we cannot introduce {\it new} dependencies {\it post hoc} to get the causal judgement we want. Moreover, an appeal to uncertainty about the model is required already in the standard Suzy-Billy example, where we cannot conclude that Billy has been preempted unless we entertain the ``impossible'' counterfactual state where he throws and misses, leaving the bottle intact. When looked at in this way, the Switch does not raise any concerns that do not already arise (or ought to arise) with respect to the core examples of preemption that are used to {\it motivate} the HP account of causation. In short, the HP theory denies that structural equations can be necessarily accurate. For this reason, I should not like to take the present version of the HP theory as a point of departure when discussing laws of nature (assuming such exist).} Even without such a modification, the HP framework makes sense when we work with particular facts, and the equations in our system are to be regarded as no more than records of observed (or defeasibly stipulated) dependencies. Still, as examples like the Switch show, I should not like to take the present version of the HP theory as a point of departure when discussing laws of nature (assuming such exist). This caveat also explains why I have been interested in finding a principle of causation that provides for a well-behaved {\it empirical} theory, as opposed to an axiomatic or metaphysical one.

\section{Conclusion}\label{sec:5}

In this paper, I have given a formal definition of preemption and overdetermination for causal theories in the {\halpern} formalism. When defined in a precise way, these two notions seem very useful as means for evaluating, comparing and classifying causal theories. Indeed, this is how the notions are already used in the literature, where they are regularly invoked in informal discussions about formal and semi-formal theories of actual causality.\footnote{See, e.g.,  \cite{lewis73,hitchcock11,paul13,bernstein14}.} By using a formalisation, I was able to identify problems with how the distinction between preemption and overdetermination is made in leading theories of causality formulated in the {\halpern} tradition. 

To arrive at the key definition, I used the most recent {\halpern} definition of causality to identify a notion of putative causation based on counterfactual intervention and but-for testing. This led to a definition of preempted causes as putative causes that are not recognised as actual ones. The problematic cases were then diagnosed as resulting from the minimality constraint used by the {\halpern} definition as an abstract way of forcing us to regard some putative causes as having been preempted.

While it is true that not all putative causes can be recognised as actual causes, I argued that the minimality constraint lacks general appeal and has little or no explanatory power when it comes to identifying cases of preemption. Essentially, my point was that the minimality constraint is a remnant of logic-based ideas about causality that we know to be flawed and ought to abandon completely. I then proposed the principle of presumption, whereby a putative cause must be recognised as an actual cause unless there is theory-independent evidence to the contrary. 

Following up on this, I proposed the principle of empiricism, defining thereby a class of causal theories that always recognise exactly those putative causes for which there is no independent evidence of preemption. The definition amounts to providing a fixed-point characterisation of causal theories that rely entirely on but-for testing and counterfactual intervention when deciding what counts as a cause. It follows from the examples given in this paper that the leading {\halpern} definitions of causality are not empirical theories. In future work, I will present results that shed further light on the class of empirical causal theories. 

\bibliography{aaai}
\bibliographystyle{eptcs}
\end{document}